\newtheorem{theorem}{Theorem}
\newcommand{\bx}{\mathbf{x}}
\newcommand{\bw}{\mathbf{w}}
\newcommand{\bg}{\mathbf{g}}
\newcommand{\bu}{\mathbf{u}}
\newcommand{\bv}{\mathbf{v}}
\newcommand{\be}{\mathbf{e}}
\newcommand{\bdelta}{\boldsymbol{\delta}}
\newcommand{\Dist}{\textnormal{Dist}}
\newcommand{\Loss}{\textnormal{Loss}}
\begin{document}
%
\title{AutoZOOM: Autoencoder-based Zeroth Order Optimization Method \\ for Attacking  Black-box Neural Networks}
\author{Chun-Chen Tu$^{1}$\thanks{equal contribution},~Paishun Ting$^{1*}$,~Pin-Yu Chen$^{2*}$,~Sijia Liu$^{2}$, \\ {\Large \textbf{Huan Zhang}$^{3}$,~ \textbf{Jinfeng Yi}$^{4}$,~\textbf{Cho-Jui Hsieh}$^{3}$,~\textbf{Shin-Ming Cheng}$^{5}$ }\\
	$^1$University of Michigan, Ann Arbor, USA \\
	$^2$MIT-IBM Watson AI Lab, IBM Research \\
   $^3$University of California, Los Angeles, USA \\ 
	$^4$JD AI Research, Beijing, China \\  
	$^5$National Taiwan University of Science and Technology, Taiwan 
}
\nocopyright
\maketitle
\begin{abstract}
Recent studies have shown that adversarial examples in state-of-the-art image classifiers trained by deep neural networks (DNN) can be easily generated when the target model is transparent to an attacker, known as the white-box setting. However, when attacking a deployed machine learning service, one can only acquire the input-output correspondences of the target model; this is the so-called black-box attack setting. The major drawback of existing black-box attacks is the need for excessive model queries, which may give a false sense of model robustness due to inefficient query designs. To bridge this gap, we propose a generic framework for query-efficient black-box attacks. Our framework, \textbf{AutoZOOM}, which is short for \textbf{Auto}encoder-based \textbf{Z}eroth \textbf{O}rder \textbf{O}ptimization \textbf{M}ethod, has two novel building blocks towards efficient black-box attacks: (i) an adaptive random gradient estimation strategy to balance query counts and distortion, and (ii) an autoencoder that is either trained offline with unlabeled data or a bilinear resizing operation for attack acceleration. 
Experimental results suggest that, by applying AutoZOOM to a state-of-the-art black-box attack (ZOO), a significant reduction in model queries can be achieved without sacrificing the attack success rate and the visual quality of the resulting adversarial examples. In particular, when compared to the standard ZOO method,
AutoZOOM can consistently reduce the mean query counts in finding successful adversarial examples (or reaching the same distortion level)
by at least 93\% on MNIST, CIFAR-10 and ImageNet datasets, leading to novel insights on adversarial robustness. 
\end{abstract}

\section{Introduction}

In recent years, ``machine learning as a service'' has offered the world an effortless access to powerful machine learning tools for a wide variety of tasks.
For example, commercially available services such as Google Cloud Vision API
and Clarifai.com
provide well-trained image classifiers to the public. 
One is able to upload and obtain the class prediction results for images at hand at a low price. 
However, the existing and emerging machine learning platforms and their low model-access costs raise ever-increasing security concerns, as they also offer an ideal environment for testing malicious attempts. Even worse, the risks can be amplified when 
these services are used to build derived products such that the 
inherent security vulnerability could be leveraged by attackers.


In many computer vision tasks,  
DNN models achieve the state-of-the-art prediction accuracy and hence are widely deployed in modern machine learning services. 
Nonetheless, recent studies have highlighted DNNs' vulnerability to adversarial perturbations. In the \textit{white-box} setting in which the target model is entirely transparent to an attacker,  
visually imperceptible adversarial images can be easily crafted to fool a target DNN model towards misclassification by leveraging the input gradient information \cite{szegedy2013intriguing,goodfellow2014explaining}. However, in the \textit{black-box} setting in which
the parameters of the deployed model are hidden and one can only observe the input-output correspondences of a queried example, crafting adversarial examples requires a gradient-free (zeroth order) optimization approach to gather necessary attack information. Figure \ref{fig:first_figure} displays a prediction-evasive adversarial example crafted via iterative model queries from a black-box DNN (the Inception-v3 model \cite{szegedy2016rethinking}) trained on ImageNet.



\begin{figure}[!t]
	\centering
	\includegraphics[width=1\linewidth]{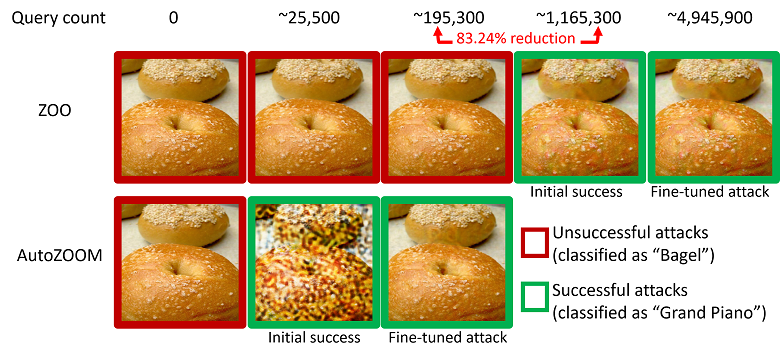}
		\caption{AutoZOOM significantly reduces the number of queries required to generate a successful adversarial Bagel image from the black-box Inception-v3 model.}
		\label{fig:first_figure}
\end{figure}

\begin{figure*}[t]
	\centering
	\includegraphics[width=1\textwidth]{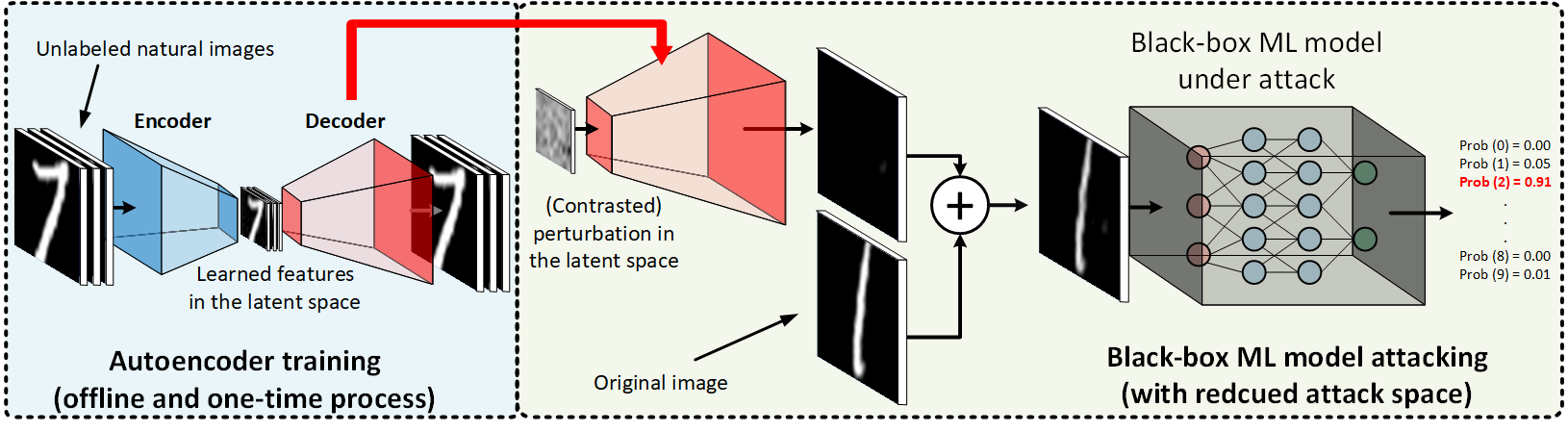}
	\caption{Illustration of attack dimension reduction through a ``decoder'' in AutoZOOM for improving query efficiency in black-box attacks. 
	The decoder has two modes: (i)
	An autoencoder (AE) trained on unlabeled natural images that are different from the attacked images and training data; (ii) a simple bilinear image resizer (BiLIN) that is applied channel-wise to extrapolate low-dimensional feature to the original image dimension (width $\times$ height). In the latter mode, no additional training is required.}
	\label{fig:autoencoder}
\end{figure*}

Albeit achieving remarkable attack effectiveness by the use of gradient estimation, current black-box attack methods, such as \cite{chen2017zoo,bhagoji2017exploring}, are not query-efficient since they exploit coordinate-wise gradient estimation and value update, which inevitably incurs an excessive number of model queries and may give a false sense of model robustness due to inefficient query designs.
In this paper, we propose to tackle the preceding problem by using \textbf{AutoZOOM}, an \textbf{Auto}encoder-based \textbf{Z}eroth \textbf{O}rder \textbf{O}ptimization \textbf{M}ethod.
AutoZOOM has two novel building blocks: (i) a new and adaptive random gradient estimation strategy to balance the query counts and distortion when crafting adversarial examples, and (ii) an autoencoder that is either trained offline on other unlabeled data, or based on a simple bilinear resizing operation, in order to accelerate black-box attacks. As illustrated in Figure \ref{fig:autoencoder}, AutoZOOM utilizes a ``decoder'' to craft a high-dimensional adversarial perturbation from the (learned) low-dimensional latent-space representation, and its query efficiency can be well explained by the dimension-dependent convergence rate in gradient-free optimization.

\noindent{\textbf{Contributions.}} We summarize our main contributions and new insights on adversarial robustness as follows:
\begin{enumerate}
	\item We propose AutoZOOM, a novel query-efficient black-box attack framework for generating adversarial examples. AutoZOOM features an adaptive random gradient estimation strategy and dimension reduction techniques (either an offline trained autoencoder or a bilinear resizer)
	to reduce attack query counts while maintaining attack effectiveness and visual similarity. To the best of our knowledge, AutoZOOM is the first black-box attack using random full gradient estimation and data-driven acceleration. 
	\item We use the convergence rate of zeroth-order optimization to motivate the query efficiency of AutoZOOM and provide an error analysis of the new gradient estimator in AutoZOOM to the true gradient for characterizing the trade-offs between estimation error and query counts.
	\item When applied to a state-of-the-art black-box attack proposed in \cite{chen2017zoo}, AutoZOOM attains a similar attack success rate while achieving a significant reduction (at least 93\%) in the mean query counts required to attack the DNN image classifiers for MNIST, CIFAR-10 and ImageNet. It can also fine-tune the distortion in the post-success stage by performing finer gradient estimation. 
	\item In the experiments, we also find that AutoZOOM with a simple bilinear resizer as the decoder (AutoZOOM-BiLIN) can attain noticeable query efficiency, despite that it is still worse than AutoZOOM with an offline trained autoencoder (AutoZOOM-AE). However, AutoZOOM-BiLIN is easier to be mounted as no additional training is required. The results also suggest an interesting finding that while learning effective low-dimensional representations of legitimate images is still a challenging task, black-box attacks using significantly less degree of freedoms (i.e., reduced dimensions) are certainly plausible.
\end{enumerate}

\section{Related Work}
Gradient-based adversarial attacks on DNNs fall within the white-box setting, since acquiring the gradient with respect to the input requires knowing the weights of the target DNN. 
As a first attempt towards black-box attacks, the authors in \cite{papernot2017practical} proposed to train a substitute model using iterative model queries, performing white-box attacks on the substitute model, and implementing transfer attacks to the target model \cite{papernot2016transferability,liu2016delving}. However, its 
attack performance can be severely degraded due to poor attack transferability~\cite{DBLP:conf/eccv/SuZCYCG18}. 
Although ZOO achieves a similar attack success rate and comparable visual quality as many white-box attack methods \cite{chen2017zoo}, its coordinate-wise gradient estimation  requires excessive target model evaluations and is hence not query-efficient. The same gradient estimation technique is also used in  \cite{bhagoji2017exploring}. 

Beyond optimization-based approaches, the authors in \cite{ilyas2018black} proposed to use a natural evolution strategy (NES) to enhance  query efficiency. Although there is a vector-wise gradient estimation step in the NES attack, we treat it as a parallel work since its natural evolutionary step is out of the scope of black-box attacks using zeroth-order gradient descent.  We also note that different from NES, our AutoZOOM framework uses a theory-driven query-efficient random-vector based gradient estimation strategy. In addition, AutoZOOM
could be applied to further improve the query efficiency of NES, since NES does not take into account the factor of attack dimension reduction, which is the novelty in AutoZOOM as well as the main focus of this paper.

Under a more restricted attack setting, where only the decision (top-1 prediction class) is known to an attacker,
the authors in \cite{brendel2017decision} proposed a random-walk based attack around the decision boundary.
 Such a black-box attack dispenses class prediction scores and hence requires additional model queries. 
 Due to space limitation, we provide more background and a table comparing existing black-box attacks in the supplementary material.



\section{AutoZOOM: Background and Methods}
\label{sec_auto}
\subsection{Black-box Attack Formulation and Zeroth Order Optimization}
 Throughout this paper, we focus on improving the query efficiency of gradient-estimation and gradient-descent based black-box attacks empowered by AutoZOOM, and we consider the threat model that the class prediction scores are known to an attacker. 
In this setting, it suffices to denote the target DNN as a classification function $F: [0,1]^{d} \mapsto \mathbb{R}^K$ that takes a $d$-dimensional scaled image as its input and yields a vector of prediction scores of all $K$ image classes, such as the prediction probabilities for each class. We further consider the case of applying an entry-wise monotonic transformation $M(F)$ to the output of $F$ for black-box attacks, since monotonic transformation preserves the ranking of the class predictions and can alleviate the problem of large score variation in $F$ (e.g., probability to log probability).   

Here we formulate black-box targeted attacks. The formulation can be easily adapted to untargeted attacks.  
Let $(\bx_0,t_0)$ denote a natural image $\bx_0$ and its ground-truth class label $t_0$, and let ($\bx,t$) denote the adversarial example of $\bx_0$ and the target attack class label $t \neq t_0$. 
The problem of finding an adversarial example can be formulated as an optimization problem taking the generic form of 
\begin{align}
\label{eqs_attack_optimization_problem}
& \text{min}_{\bx \in [0, 1]^d }~\Dist (\mathbf{x}, \mathbf{x}_0) + \lambda \cdot \Loss (\mathbf{x}, M(F(\bx)),t),
\end{align}
where $\Dist (\mathbf{x}, \mathbf{x}_0)$ measures the distortion between $\mathbf{x}$ and $\mathbf{x}_0$, $\Loss(\cdot)$ is an attack objective reflecting the likelihood of predicting $t = \arg \max_{k \in \{1,\ldots,K\} } [M(F(\bx))]_k$, $\lambda$ is a regularization coefficient, and the constraint $\mathbf{x} \in [0, 1]^d$ confines the adversarial image $\bx$ to the valid image space. The distortion $\Dist(\bx,\bx_0)$ is often evaluated by the $L_p$ norm defined as $\Dist(\bx,\bx_0)=\|\bx-\bx_0\|_p = \| \bdelta \|_p=\sum_{i=1}^{d} |\bdelta_i|^{1/p}$ for $p \geq 1$, where $\bdelta=\bx-\bx_0$ is the adversarial perturbation to $\bx_0$. 
The attack objective $\Loss(\cdot)$ can be the training loss of DNNs \cite{goodfellow2014explaining} or some designed loss based on model predictions \cite{carlini2017towards}.  

In the white-box setting, an adversarial example is generated by using downstream optimizers such as ADAM \cite{kingma2014adam} to solve (\ref{eqs_attack_optimization_problem}); this requires the gradient $\nabla f (\bx)$ of the objective function $f(\bx) = \Dist (\mathbf{x}, \mathbf{x}_0) + \lambda \cdot \Loss (\mathbf{x}, M(F(x)),t)$ relative to the input of $F$ via back-propagation in DNNs. However, in the black-box setting, acquiring $\nabla f (\cdot)$ is implausible, and one can only obtain the function evaluation $F(\cdot)$, which renders solving (\ref{eqs_attack_optimization_problem}) a zeroth order optimization problem.
Recently, zeroth order optimization approaches \cite{ghadimi2013stochastic,nesterov2017random,liu2017zeroth} circumvent the preceding challenge by approximating the true gradient via function evaluations. Specifically, in black-box attacks, the gradient estimate is applied to  both gradient computation and descent in the optimization process for solving (\ref{eqs_attack_optimization_problem}).

\subsection{Random Vector based Gradient Estimation}
\label{subsec:rand_grad_est}
As a first attempt to enable gradient-free black-box attacks on DNNs, the authors in \cite{chen2017zoo} use the symmetric difference quotient method \cite{lax2014calculus} to evaluate the gradient $\frac{\partial f(\bx)}{\partial \bx_i}$ of the $i$-th component by 
\begin{align}
\label{eqn_grad_coor}
g_i =  \frac{f(\bx+ h \be_i) - f(\bx - h \be_i) }{2h} \approx \frac{\partial f(\bx)}{\partial \bx_i}
\end{align}
using a small $h$. Here $\be_i$ denotes the $i$-th elementary basis. 
Albeit contributing to powerful black-box attacks and applicable to large networks like ImageNet, the nature of coordinate-wise gradient estimation step in (\ref{eqn_grad_coor}) must incur an enormous amount of model queries and is hence not query-efficient. For example, the ImageNet dataset has $ d=299\times 299 \times 3 \approx 270,000 $ input dimensions, rendering coordinate-wise zeroth order optimization based on gradient estimation query-inefficient.    

To improve query efficiency, we dispense with coordinate-wise estimation and instead propose a scaled random full gradient estimator of $\nabla f(\bx)$, defined as
\begin{align}
\label{eqn_grad_rand}
\bg =  b \cdot \frac{f(\bx+ \beta \bu) - f(\bx) }{\beta} \cdot \bu, 
\end{align}
where 
$\beta > 0$ is a smoothing parameter, $\bu$ is a unit-length vector that is uniformly drawn at random from a unit Euclidean sphere, and $b$ is a tunable scaling parameter that balances the bias and variance trade-off of the gradient estimation error. Note that with $b = 1$, the gradient estimator in \eqref{eqn_grad_rand} becomes the one used in \cite{duchi2015optimal}. With $b = d$, this estimator becomes the one adopted in \cite{gao2014information}. We will provide an optimal value $b^*$ for balancing query efficiency and estimation error in the following analysis.

\noindent{\textbf{Averaged random gradient estimation.}}
To effectively control the error in gradient estimation, we consider a more general gradient estimator, in which the gradient estimate is averaged over  $q$ random directions $\{ \mathbf u_j \}_{j = 1}^q$. That is,
\begin{align}
\label{eqn_grad_rand_avg}
\overline{\bg} =  \frac{1}{q} \sum_{j=1}^q \bg_j, 
\end{align}
where $\bg_j$ is a gradient estimate defined in \eqref{eqn_grad_rand} with $\mathbf u = \mathbf u_j$. 
The use of multiple random directions can reduce the variance of $\overline{\bg}$ 
in \eqref{eqn_grad_rand_avg} 
for convex loss functions \cite{duchi2015optimal,liu2017zeroth}.

Below we establish an error analysis of the averaged random gradient estimator in (\ref{eqn_grad_rand_avg}) for studying the influence of the parameters $b$ and $q$ on estimation error and query efficiency.

\begin{theorem}
	\label{thm_grad}
	Assume $f:\mathbb{R}^d \mapsto \mathbb{R} $ is differentiable and its gradient $\nabla f(\cdot)$ is $L$-Lipschitz\footnote{A function $W(\cdot)$ is $L$-Lipschitz if $\|W(\bw_1) - W(\bw_2)\|_2 \leq L \|\bw_1 - \bw_2\|_2$ for any $\bw_1, \bw_2$.  For DNNs with ReLU activations, $L$ can be derived from the model weights \cite{szegedy2013intriguing}.}. Then the mean squared estimation error of $\overline{\bg}$ in (\ref{eqn_grad_rand_avg}) is upper bounded by 
	\begin{align}
	\label{eqn_error}       
	\mathbb E \| \overline{\bg} - \nabla f(\mathbf x)\|_2^2  &\leq  4 ( \frac{b^2}{d^2}+\frac{ b^2 }{d q} + \frac{(b-d)^2}{d^2}) \| \nabla f(\mathbf x) \|_2^2  \nonumber \\
	&~~+ \frac{2q+1}{q} b^2  \beta^2 L^2.  
	\end{align}
\end{theorem}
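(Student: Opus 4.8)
The plan is to bound $\mathbb E\|\overline{\bg}-\nabla f(\bx)\|_2^2$ by a bias--variance decomposition in which the finite-difference error is separated from the intrinsic error of random directional sampling. I would first introduce the noise-free surrogate $\bg_j^\nabla := b\,\langle\nabla f(\bx),\bu_j\rangle\,\bu_j$, i.e. what $\bg_j$ would be if the symmetric difference quotient returned the exact directional derivative, and write $r_j := \frac{1}{\beta}\big(f(\bx+\beta\bu_j)-f(\bx)\big)-\langle\nabla f(\bx),\bu_j\rangle$, so that $\bg_j=\bg_j^\nabla+b\,r_j\,\bu_j$. Averaging over the $q$ directions, $\overline{\bg}-\nabla f(\bx)$ splits into three pieces: the averaged finite-difference residual $\frac{b}{q}\sum_j r_j\bu_j$, the centered surrogate fluctuation $\overline{\bg^\nabla}-\frac{b}{d}\nabla f(\bx)$, and the deterministic scaling bias $\big(\frac{b}{d}-1\big)\nabla f(\bx)$. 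Applying $\|\mathbf{a}+\mathbf{b}\|_2^2\le 2\|\mathbf{a}\|_2^2+2\|\mathbf{b}\|_2^2$ in a nested fashion to these three pieces is what produces the stated numerical constants, namely the leading $4$ on the $\|\nabla f(\bx)\|_2^2$ terms and the coefficient $\frac{2q+1}{q}$ on $b^2\beta^2L^2$.

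The two workhorses are the uniform-sphere moment identities and the $L$-smoothness assumption. Because $\bu_j$ is isotropic on the unit sphere, $\mathbb E[\bu_j\bu_j^\top]=\frac{1}{d}I$, which immediately gives $\mathbb E[\bg_j^\nabla]=\frac{b}{d}\nabla f(\bx)$ and $\mathbb E\|\bg_j^\nabla\|_2^2=\frac{b^2}{d}\|\nabla f(\bx)\|_2^2$. The first identity turns the scaling bias into exactly $\frac{(b-d)^2}{d^2}\|\nabla f(\bx)\|_2^2$. For the centered fluctuation, which has mean zero, I would bound its variance by the raw second moment, $\mathrm{Var}(\overline{\bg^\nabla})\le\mathbb E\|\overline{\bg^\nabla}\|_2^2=\frac{b^2}{d^2}\|\nabla f(\bx)\|_2^2+\frac{1}{q}\mathrm{Var}(\bg_1^\nabla)\le\big(\frac{b^2}{d^2}+\frac{b^2}{dq}\big)\|\nabla f(\bx)\|_2^2$, where the $\frac{1}{q}$ comes from the $q$ independent draws; the non-vanishing $\frac{b^2}{d^2}$ term is $\|\mathbb E\bg_1^\nabla\|_2^2$ and is the deliberately loose price of this simplification. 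For the residual, $L$-Lipschitzness of $\nabla f$ yields the pointwise bound $|r_j|\le\frac{L\beta}{2}$ (using $\|\bu_j\|_2=1$), which feeds the $b^2\beta^2L^2$ terms once $\mathbb E\|\frac{b}{q}\sum_j r_j\bu_j\|_2^2$ is expanded into its $q$ diagonal and $q(q-1)$ off-diagonal contributions.

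The step I expect to be the main obstacle is this residual term, precisely because $r_j$ and $\bu_j$ are not independent --- both are functions of the same random direction --- so the cross terms in $\mathbb E\|\frac{b}{q}\sum_j r_j\bu_j\|_2^2$ do not cancel and cannot be disposed of by a naive variance argument. The fix is to bound the off-diagonal contributions through $\|\mathbb E[r_1\bu_1]\|_2^2\le\mathbb E[r_1^2]\le\frac{L^2\beta^2}{4}$ (Jensen, since $\|\bu_1\|_2=1$) and the diagonal ones directly by $\mathbb E[r_1^2]$, after which the differing $q$-dependence of the two kinds of terms, combined with the $2+2$ nesting above, yields the $\frac{2q+1}{q}$ factor. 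A secondary matter is careful bookkeeping: the scaling bias $\big(\frac{b}{d}-1\big)\nabla f(\bx)$ must be held outside every expectation so that it survives the $\beta\to0$, $q\to\infty$ limit, since it is exactly the irreducible error that the subsequent choice of the optimal $b^\ast$ is designed to trade off against the variance.
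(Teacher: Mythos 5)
Your proposal is correct, and it proves the theorem by a genuinely different route than the paper. The paper's proof goes through the smoothed function $f_\beta(\bx)=\mathbb{E}_{\bu}[f(\bx+\beta\bu)]$ and imports Lemma 4.1 of \cite{gao2014information} three times: the mean identity $\mathbb{E}[\bg]=\tfrac{b}{d}\nabla f_\beta(\bx)$, the smoothing-bias bound $\|\nabla f_\beta(\bx)-\nabla f(\bx)\|_2\le \tfrac{\beta d L}{2}$ (which gives $\mathbb{E}[\bg]=\tfrac{b}{d}\nabla f(\bx)+\boldsymbol{\epsilon}$ with $\|\boldsymbol{\epsilon}\|_2\le\tfrac{b\beta L}{2}$), and the second-moment bound $\mathbb{E}\|\bg\|_2^2\le\tfrac{b^2L^2\beta^2}{2}+\tfrac{2b^2}{d}\|\nabla f(\bx)\|_2^2$; it then splits the error around $\bv=\mathbb{E}[\bg_1]$ and bounds $\mathbb{E}\|\overline{\bg}-\bv\|_2^2\le\mathbb{E}\|\overline{\bg}\|_2^2$. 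You never introduce $f_\beta$: your surrogate-plus-residual decomposition $\bg_j=\bg_j^\nabla+b\,r_j\bu_j$ cleanly separates the $\beta$-dependent finite-difference error from the $\beta$-independent sampling error, and all you need is the isotropy identity $\mathbb{E}[\bu_j\bu_j^\top]=\tfrac{1}{d}I$ together with the descent-lemma bound $|r_j|\le\tfrac{L\beta}{2}$; your diagonal/off-diagonal expansion with Jensen is exactly the right fix for the correlation between $r_j$ and $\bu_j$, and every step checks out (incidentally, the estimator in \eqref{eqn_grad_rand} is a forward rather than symmetric difference, but your $r_j$ is defined consistently with it, so nothing breaks). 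What your route buys is self-containedness and, in fact, a sharper bound: your residual term satisfies $\mathbb{E}\bigl\|\tfrac{b}{q}\sum_j r_j\bu_j\bigr\|_2^2\le\tfrac{b^2\beta^2L^2}{4}$ with no $q$-dependence at all (the diagonal terms contribute $\tfrac{1}{4q}b^2\beta^2L^2$ and the off-diagonal ones $\tfrac{q-1}{4q}b^2\beta^2L^2$), so after the nesting that places a factor $4$ on the two gradient terms you end up with coefficient $\tfrac{1}{2}$ on $b^2\beta^2L^2$, not $\tfrac{2q+1}{q}$. Your one inaccuracy is therefore the claim that your decomposition ``produces'' the stated constant $\tfrac{2q+1}{q}$: it cannot, and instead produces a strictly smaller one, which of course still implies \eqref{eqn_error} and only strengthens the theorem. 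The paper's larger constant is an artifact of its route, in which the smoothing bias $\boldsymbol{\epsilon}$ contaminates both the mean estimate and the second-moment estimate; what the paper's route buys in exchange is brevity, since its three key inequalities are taken off the shelf.
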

\begin{proof}
	The proof is given in the supplementary file.
\end{proof}
Here we highlight the important implications based on Theorem \ref{thm_grad}: (i) The error analysis holds when $f$ is \textit{non-convex}; (ii) In DNNs, the true gradient $\nabla f$ can be viewed as the numerical gradient obtained via back-propagation; (iii) For any fixed $b$, selecting a small $\beta$ (e.g., we set $\beta=1/d$ in AutoZOOM) can effectively reduce the last error term in \eqref{eqn_error}, and we therefore focus on optimizing the first error term; (iv) The first error term in \eqref{eqn_error} exhibits the influence of $b$ and $q$ on the estimation error, and is independent of $\beta$.  We further elaborate on (iv) as follows.
Fixing $q$ and let $\eta(b)=\frac{b^2}{d^2}+\frac{ b^2 }{d q} + \frac{(b-d)^2}{d^2}$ to be the coefficient of the first error term in \eqref{eqn_error}, then the optimal $b$ that minimizes $\eta(b)$ is $b^* = \frac{dq}{2q+d}$. For query efficiency, one would like to keep $q$ small, which then implies $b^* \approx q$ and $\eta(b^*) \approx 1$ when the dimension $d$ is large. On the other hand, when $q \rightarrow \infty$, $b^* \approx d/2$  and $\eta(b^*) \approx 1/2$, which yields a smaller error upper bound but is query-inefficient. We also note that by setting $b = q$, the coefficient $\eta(b)=\frac{b^2}{d^2}+\frac{ b^2 }{d q} + \frac{(b-d)^2}{d^2} \approx 1$ and thus is independent of the dimension $d$ and the parameter $q$.  

\noindent{\textbf{Adaptive random gradient estimation.}} 
Based on Theorem \ref{thm_grad} and our error analysis,
in AutoZOOM we set $b=q$ in \eqref{eqn_grad_rand} and 
propose to use an adaptive strategy for selecting $q$. AutoZOOM uses $q=1$ (i.e., the fewest possible model evaluation) to first obtain rough gradient estimates for solving \eqref{eqs_attack_optimization_problem} until 
a successful adversarial image is found. After the initial attack success, it switches to use more accurate gradient estimates with $q>1$ to fine-tune the image quality. The trade-off between $q$ (which is proportional to query counts) and distortion reduction will be investigated in Section \ref{sec_perf}.

\subsection{Attack Dimension Reduction via Autoencoder}
\label{subsec_AE}

\noindent{\textbf{Dimension-dependent convergence rate using gradient estimation.}} Different from the first order convergence results, the convergence rate of zeroth order gradient descent methods has an additional multiplicative dimension-dependent factor $d$. In the convex loss setting the rate is $O(\sqrt{d/T})$, where $T$ is the number of iterations \cite{nesterov2017random,liu2017zeroth,gao2014information,wang2017stochastic}. The same convergence rate has also been found in the nonconvex setting \cite{ghadimi2013stochastic}. The dimension-dependent convergence factor $d$  suggests that vanilla black-box attacks using gradient estimations can be query inefficient when the (vectorized) image dimension $d$ is large, due to the curse of dimensionality in convergence. This also motivates us to propose using an autoencoder to reduce the attack dimension and improve query efficiency in black-box attacks. 

In AutoZOOM, we propose to perform random gradient estimation from a reduced dimension $d^\prime < d$ to improve query efficiency. Specifically, as illustrated in Figure \ref{fig:autoencoder}, the additive perturbation to an image $\bx_0$ is actually implemented through a ``decoder'' $D: \mathbb{R}^{d^\prime} \mapsto \mathbb{R}^d $ such that $\bx=\bx_0 + D(\bdelta^\prime)$, where  $\bdelta^\prime \in \mathbb{R}^{d^\prime}$. In other words,  the adversarial perturbation $\bdelta \in \mathbb{R}^d$ to  $\bx_0$ is in fact generated from a dimension-reduced space, with an aim of improving query efficiency due to the reduced dimension-dependent factor in the convergence analysis.
AutoZOOM provides two modes for such a decoder $D$: \\
$\bullet$ An autoencoder (AE) trained on unlabeled data that are different from the training data to learn reconstruction from a dimension-reduced representation. The encoder $E(\cdot)$ in an AE compresses the data to a low-dimensional latent space and the decoder $D(\cdot)$ reconstructs an example from its latent representation. The weights of an AE are learned to minimize the average $L_2$ reconstruction error. Note that training such an AE for black-box adversarial attacks is one-time and is entirely offline (i.e., no model queries needed).  \\
$\bullet$ A simple channel-wise bilinear image resizer (BiLIN) that scales a small image to a large image via bilinear extrapolation\footnote{See \url{tf.image.resize_images}, a TensorFlow example.}. Note that no additional training is required for BiLIN.


\begin{algorithm}[t]
	\caption{AutoZOOM for black-box attacks on DNNs}
	\label{algo_autozoom}
	\begin{algorithmic}
		\State \textbf{Input:} Black-box DNN model $F$, original example $\bx_0$, distortion measure $\Dist(\cdot)$, attack objective $\Loss(\cdot)$, monotonic transformation $M(\cdot)$, decoder $D(\cdot) \in \{\textnormal{AE},\textnormal{BiLIN}\}$, initial coefficient $\lambda_{\textnormal{ini}}$, query budget $Q$
		\While{query count $\leq Q$}        
		\State \textbf{1. Exploration:} use $\bx=\bx_0+D(\bdelta^\prime)$ and apply the random gradient estimator in \eqref{eqn_grad_rand_avg} with
		$q=1$ 	 to the downstream optimizer (e.g., ADAM) for solving \eqref{eqs_attack_optimization_problem} until an initial attack is found.
		\State \textbf{2. Exploitation (post-success stage):}  continue to fine-tune the adversarial perturbation $D(\bdelta^\prime)$ for solving \eqref{eqs_attack_optimization_problem} while setting $q \geq 1$ in \eqref{eqn_grad_rand_avg}.
		\EndWhile        
		\State \textbf{Output:} Least distorted successful adversarial example		
	\end{algorithmic}
\end{algorithm} 

\noindent{\textbf{Why AE?}} Our proposal of AE is motivated by the insightful findings in \cite{goodfellow2014explaining} that a successful adversarial perturbation is highly relevant
to some human-imperceptible noise pattern resembling the shape of the target class, known as the ``shadow''. Since 
a decoder in AE learns to reconstruct data from latent representations, it can also provide distributional guidance for mapping adversarial perturbations to generate these shadows.

We also note that for any reduced dimension $d^\prime$, the setting $b^*=q$ is optimal in terms of minimizing the corresponding estimation error from Theorem \ref{thm_grad}, despite the fact that the gradient estimation errors of different reduced dimensions cannot be directly compared. 
In Section \ref{sec_perf} we will report the superior query efficiency in black-box attacks achieved with the use of AE or BiLIN as the decoder, and discuss the benefit of attack dimension reduction.

\subsection{AutoZOOM Algorithm}
Algorithm \ref{algo_autozoom} summarizes the AutoZOOM framework towards query-efficient black-box attacks on DNNs. We also note that AutoZOOM is a general acceleration tool that is compatible with any gradient-estimation based black-box adversarial attack obeying the attack formulation in \eqref{eqs_attack_optimization_problem}. It also has some theoretical estimation error guarantees and query-efficient parameter selection based on Theorem \ref{thm_grad}.
The details on adjusting the regularization coefficient $\lambda$ and the query parameter $q$ based on run-time model evaluation results will be discussed in Section \ref{sec_perf}.
Our source code is publicly available\footnote{\url{https://github.com/IBM/Autozoom-Attack}}.

\section{Performance Evaluation}
\label{sec_perf}
This section presents the experiments for assessing the performance of AutoZOOM in accelerating black-box attacks on DNNs in terms of the number of queries required for an initial attack success and for a specific distortion level.

\subsection{Distortion Measure and Attack Objective}
As described in Section \ref{sec_auto}, AutoZOOM is a query-efficient gradient-free optimization framework for solving the black-box attack formulation in (\ref{eqs_attack_optimization_problem}). In the following experiments, we demonstrate the utility of AutoZOOM by using the same attack formulation proposed in ZOO \cite{chen2017zoo}, which uses the squared $L_2$ norm as the distortion measure $\Dist(\cdot)$ and adopts the attack objective
\begin{align}
\Loss=\max \{ \max_{j \neq t} \log [F(\bx)]_j -  \log [F(\bx)]_t\},0 \},
\end{align}
where this hinge function is designed for targeted black-box attacks on the DNN model $F$, and the monotonic transformation $M(\cdot) = \log (\cdot)$ is applied to the model output.

\subsection{Comparative Black-box Attack Methods}
We compare \textbf{AutoZOOM-AE} ($D=\textnormal{AE}$) and \textbf{AutoZOOM-BiLIN} ($D=\textnormal{BiLIN}$) with two different baselines: (i) Standard \textbf{ZOO} implementation\footnote{\url{https://github.com/huanzhang12/ZOO-Attack}} with bilinear scaling (same as BiLIN) for dimension reduction; (ii)  \textbf{ZOO+AE}, which is ZOO with AE.
Note that all attacks indeed generate adversarial perturbations based on the same reduced attack dimension.

\subsection{Experiment Setup, Evaluation, Datasets and AutoZOOM Implementation}
We assess the performance of different attack methods on several representative benchmark datasets, including MNIST \cite{lecun1998gradient}, CIFAR-10 \cite{krizhevsky2009learning} and ImageNet \cite{russakovsky2015imagenet}. For MNIST and CIFAR-10, we use the same DNN image classification models\footnote{\url{https://github.com/carlini/nn_robust_attacks}} as in
\cite{carlini2017towards}. For ImageNet, we use the Inception-v3 model \cite{szegedy2016rethinking}.
All experiments were conducted using TensorFlow Machine-Learning Library \cite{abadi2016tensorflow} on machines equipped with an Intel Xeon E5-2690v3 CPU and an Nvidia Tesla K80 GPU.

All attacks used ADAM \cite{kingma2014adam} for solving \eqref{eqs_attack_optimization_problem} with their estimated gradients and the same initial learning rate $2 \times 10^{-3}$. On MNIST and CIFAR-10, all methods adopt 1,000 ADAM iterations. On ImageNet, ZOO and ZOO+AE adopt 20,000 iterations, whereas AutoZOOM-BiLIN and AutoZOOM-AE adopt 100,000 iterations. Note that due to different gradient estimation methods, the query counts (i.e., the number of model evaluations)  per iteration of a black-box attack may vary. ZOO and ZOO+AE use the parallel gradient update of \eqref{eqn_grad_coor} with a batch of $128$ pixels, yielding 256 query counts per iteration. 
AutoZOOM-BiLIN and AutoZOOM-AE use the averaged random full gradient estimator in \eqref{eqn_grad_rand_avg}, resulting in $q+1$ query counts per iteration.
For a fair comparison, the query counts are used for performance assessment.

\noindent{\textbf{Query reduction ratio.}} We use the mean query counts of ZOO with the smallest $\lambda_{\textnormal{ini}}$ as the baseline for computing the query reduction ratio of other methods and configurations.

\noindent{\textbf{TPR and initial success.}} We report the true positive rate (TPR), which measures the percentage of successful attacks fulfilling a pre-defined constraint $\ell$ on the normalized (per-pixel) $L_2$ distortion, as well as their query counts of first successes. We also report the per-pixel $L_2$ distortions of initial successes, where an initial success refers to the first query count that finds a successful adversarial example.


\noindent{\textbf{Post-success fine-tuning.}} When implementing AutoZOOM in Algorithm \ref{algo_autozoom},
on MNIST and CIFAR-10 we find that AutoZOOM without fine-tuning (i.e., $q=1$) already yields similar distortion as ZOO. We note that ZOO can be viewed as coordinate-wise fine-tuning and is thus query-inefficient.
On ImageNet, we will investigate the effect of post-success fine-tuning on reducing distortion.

\noindent{\textbf{Autoencoder Training.}} In AutoZOOM-AE, we use convolutional autoencoders for attack dimension reduction, which are trained on unlabeled datasets that are different from the training dataset and the attacked natural examples. The implementation details are given in the supplementary material.

\noindent{\textbf{Dynamic Switching on $\lambda$.}}
To adjust the regularization coefficient $\lambda$ in \eqref{eqs_attack_optimization_problem}, in all methods
we set its initial value $\lambda_{\textnormal{ini}} \in \{0.1,1,10\}$ on MNIST and CIFAR-10, and set $\lambda_{\textnormal{ini}}=10 $ on ImageNet. Furthermore, for balancing the distortion $\Dist$ and the attack objective $\Loss$ in \eqref{eqs_attack_optimization_problem}, we use a \textit{dynamic switching} strategy to update $\lambda$ during the optimization process. Per every $S$ iterations, $\lambda$ is multiplied by 10 times of the current value if the attack has never been successful. Otherwise, it divides its current value by 2. On MNIST and CIFAR-10, we set $S=100$. On ImageNet, we set $S=1,000$.
At the instance of initial success, we also reset  $\lambda=\lambda_{\textnormal{ini}}$  and the ADAM parameters to the default values, as doing so  can empirically reduce the distortion for all attack methods.

\begin{table*}[t]
	\centering
	\caption{Performance evaluation of black-box targeted attacks on MNIST} 
	\label{table:mnist_mode1}
	\begin{adjustbox}{max width=0.97\textwidth}	
		\begin{tabular}{cccccccc}
			
			\hline
			Method
			& $\lambda_{\textnormal{ini}}$
			& \thead{Attack success\\rate (ASR)}
			& \thead{Mean query count\\(initial success)}
			& \thead{Mean query\\count reduction\\ratio (initial success)}
			& \thead{Mean per-pixel\\$L_{2}$ distortion\\(initial success)}
			& \thead{True positive\\rate (TPR)}
			& \thead{Mean query count\\with per-pixel $L_{2}$\\distortion $\leq$ 0.004}\\
			
			\hline
			& 0.1 & 99.44\% & 35,737.60 &  0.00\% & 3.50$\times10^{-3}$ & 96.76\% & 47,342.85 \\
			ZOO 	 &   1 & 99.44\% & 16,533.30 & 53.74\% & 3.74$\times10^{-3}$ & 97.09\% & 31,322.44 \\
			&  10 & 99.44\% & 13,324.60 & 62.72\% & 4.85$\times10^{-3}$ & 96.31\% & 41,302.12 \\
			\hline
			& 0.1 & 99.67\% & 34,093.95 &  4.60\% & 3.43$\times10^{-3}$ & 97.66\% & 44,079.92 \\
			ZOO+AE   &   1 & 99.78\% & 15,065.52 & 57.84\% & 3.72$\times10^{-3}$ & 98.00\% & 29,213.95 \\
			&  10 & 99.67\% & 12,102.20 & 66.14\% & 4.66$\times10^{-3}$ & 97.66\% & 38,795.98 \\
			\hline
			& 0.1 & 99.89\% &  2,465.95 & 93.10\% & 4.51$\times10^{-3}$ & 96.55\% &  3,941.88 \\
			AutoZOOM-BiLIN   &   1 & 99.89\% &    879.98 & 97.54\% & 4.12$\times10^{-3}$ & 97.89\% &  2,320.01 \\
			&  10 & 99.89\% &    612.34 & 98.29\% & 4.67$\times10^{-3}$ & 97.11\% &  4,729.12 \\
			\hline
			& 0.1 &\textbf{100.00\%} &  2,428.24 & \textbf{93.21\%} & 4.54$\times10^{-3}$ & 96.67\% &  3,861.30 \\
			AutoZOOM-AE &   1 &\textbf{100.00\%} &    729.65 & \textbf{97.96\%} & 4.13$\times10^{-3}$ & 96.89\% &  1,971.26 \\
			&  10 &\textbf{100.00\%} &    510.38 & \textbf{98.57\%} & 4.67$\times10^{-3}$ & 97.22\% &  4,855.01 \\
			
			\hline
		\end{tabular}
	\end{adjustbox}
\end{table*}


\begin{table*}[t]
	\centering
	\caption{Performance evaluation of black-box targeted attacks on CIFAR-10} 
	\label{table:cifar_mode_2}
	\begin{adjustbox}{max width=0.97\textwidth}	
		\begin{tabular}{cccccccc}
			
			\hline
			Method
			& $\lambda_{\textnormal{ini}}$
			& \thead{Attack success\\rate (ASR)}
			& \thead{Mean query count\\(initial success)}
			& \thead{Mean query\\count reduction\\ratio (initial success)}
			& \thead{Mean per-pixel\\$L_{2}$ distortion\\(initial success)}
			& \thead{True positive\\rate (TPR)}
			& \thead{Mean query count\\with per-pixel $L_{2}$\\distortion $\leq$ 0.0015}\\
			
			\hline
			& 0.1 & 97.00\% & 25,538.43 &  0.00\% & 5.42$\times10^{-4}$ &100.00\% & 25,568.33 \\
			ZOO 	 &   1 & 97.00\% & 11,662.80 & 54.33\% & 6.37$\times10^{-4}$ &100.00\% & 11,777.18 \\
			&  10 & 97.00\% & 10,015.08 & 60.78\% & 8.03$\times10^{-4}$ &100.00\% & 10,784.54 \\
			\hline
			& 0.1 & 99.33\% & 19,670.96 & 22.98\% & 4.96$\times10^{-4}$ &100.00\% & 20,219.42 \\
			ZOO+AE   &   1 & 99.00\% &  5,793.25 & 77.32\% & 6.83$\times10^{-4}$ & 99.89\% &  5,773.24 \\
			&  10 & 99.00\% &  4,892.80 & 80.84\% & 8.74$\times10^{-4}$ & 99.78\% &  5,378.30 \\
			\hline
			& 0.1 & 99.67\% &  2,049.28 & 91.98\% & 1.01$\times10^{-3}$ & 98.77\% &  2,112.52 \\
			AutoZOOM-BiLIN   &   1 & 99.67\% &    813.01 & 96.82\% & 8.25$\times10^{-4}$ & 99.22\% &  1,005.92 \\
			&  10 & 99.33\% &    623.96 & 97.56\% & 9.09$\times10^{-4}$ & 98.99\% &    835.27 \\
			\hline
			& 0.1 &\textbf{100.00\%} &  1,523.91 & \textbf{94.03\%} & 1.20$\times10^{-3}$ & 99.67\% &  1,752.45 \\
			AutoZOOM-AE &   1 &\textbf{100.00\%} &    332.43 & \textbf{98.70\%} & 1.01$\times10^{-3}$ & 99.56\% &    345.62 \\
			&  10 &\textbf{100.00\%} &    259.34 & \textbf{98.98\%} & 1.15$\times10^{-3}$ & 99.67\% &    990.61 \\
			
			\hline
		\end{tabular}
	\end{adjustbox}
\end{table*}

\subsection{Black-box Attacks on MNIST and CIFAR-10}
For both MNIST and CIFAR-10, we randomly select 50 correctly classified images from their test sets, and perform targeted attacks on these images. Since both datasets have 10 classes, each selected image is attacked 9 times, targeting at all but its true class. For all attacks, the ratio of reduced attack-space dimension to the original one (i.e., $d^\prime/d$) is 25\% for MNIST and 6.25\% for CIFAR-10. 

Table \ref{table:mnist_mode1} shows the performance evaluation on MNIST with various values of $\lambda_{\textnormal{ini}}$, the initial value of the regularization coefficient $\lambda$ in (\ref{eqs_attack_optimization_problem}). We use the performance of ZOO with $\lambda_{\textnormal{ini}} = 0.1$ as a baseline for comparison.   
For example, with $\lambda_{\textnormal{ini}} =$ $0.1$ and $10$, the mean query counts required by AutoZOOM-AE to attain an initial success is reduced by \textbf{93.21\%} and \textbf{98.57\%}, respectively. One can also observe that allowing larger $\lambda_{\textnormal{ini}}$ generally leads to fewer mean query counts at the price of slightly increased distortion for the initial attack. 
The noticeable huge difference in the required attack query counts between AutoZOOM and ZOO/ZOO+AE validates the effectiveness of our proposed random full gradient estimator in \eqref{eqn_grad_rand}, which dispenses with the  coordinate-wise gradient estimation in ZOO but still remains comparable true positive rates, 
thereby greatly improving query efficiency.


For CIFAR-10, we report similar query efficiency improvements as displayed in Table \ref{table:cifar_mode_2}. In particular, comparing the two query-efficient black-box attack methods (AutoZOOM-BiLIN and AutoZOOM-AE), we find that AutoZOOM-AE is more query-efficient than AutoZOOM-BiLIN, but at the cost of an additional AE training step. AutoZOOM-AE achieves the highest attack success rates (ASRs) and mean query reduction ratios for different values of $\lambda_{\textnormal{ini}}$. In addition, their true positive rates (TPRs) are similar but AutoZOOM-AE usually takes fewer query counts to reach the same $L_2$ distortion. We note that when $\lambda_{\textnormal{ini}}=10$, AutoZOOM-AE has a higher TPR but also needs slightly more mean query counts than AutoZOOM-BiLIN to reach the same $L_2$ distortion. This suggests that there are some adversarial examples that are difficult for a bilinear resizer to reduce their post-success distortions but can be handled by an AE.


\begin{table*}[t]
	\centering
	\caption{Performance evaluation of black-box targeted attacks on ImageNet} 
	\label{table:ImageNet50}
	\begin{adjustbox}{max width=0.97\textwidth}	
		\begin{tabular}{cccccccc}
			
			\hline
			Method
			& \thead{Attack success\\rate (ASR)}
			& \thead{Mean query count\\(initial success)}
			& \thead{Mean query\\count reduction\\ratio (initial success)}
			& \thead{Mean per-pixel\\$L_{2}$ distortion\\(initial success)}
			& \thead{True positive\\rate (TPR)}
			& \thead{Mean query count\\with per-pixel $L_{2}$\\distortion $\leq$ 0.0002}\\
			
			\hline
			ZOO 	 & 76.00\% & 2,226,405.04 (2.22M) &  0.00\% & 4.25$\times10^{-5}$ & 100.00\% & 2,296,293.73 \\
			\hline
			ZOO+AE   & 92.00\% & 1,588,919.65 (1.58M) & 28.63\% & 1.72$\times10^{-4}$ & 100.00\% & 1,613,078.27 \\
			\hline
			AutoZOOM-BiLIN   &\textbf{100.00\%} &    14,228.88 & 99.36\% & 1.26$\times10^{-4}$ & 100.00\% &    15,064.00 \\
			\hline
			AutoZOOM-AE &\textbf{100.00\%} &    13,525.00 & \textbf{99.39\%} & 1.36$\times10^{-4}$ & 100.00\% &    14,914.92 \\
			
			\hline
		\end{tabular}
	\end{adjustbox}
	\vspace{-4mm}
\end{table*}

\begin{figure*}[t]
	\centering
	\begin{subfigure}[b]{0.4\textwidth}
		\includegraphics[width=\textwidth]{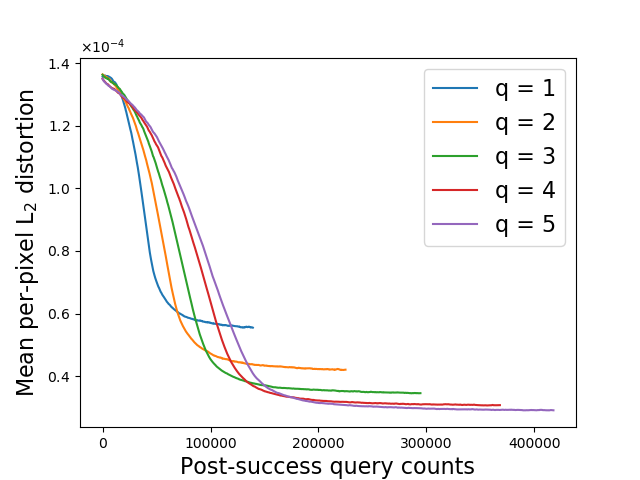}
		\caption{Post-success distortion refinement}
	\end{subfigure}%
	\hspace{3mm}
	\centering
	\begin{subfigure}[b]{0.38\textwidth}
		\includegraphics[width=\textwidth]{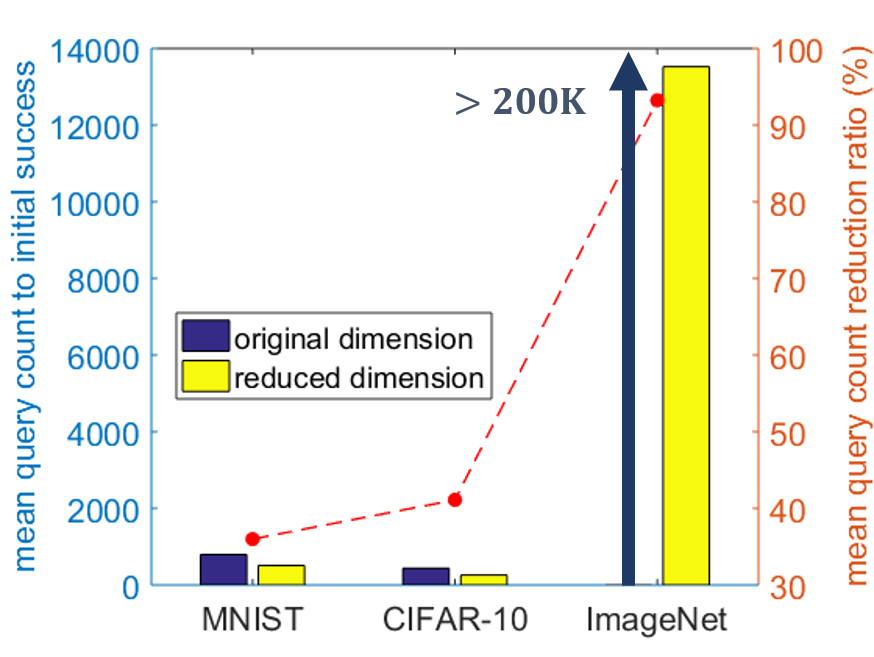}
		\caption{Dimension reduction v.s. query efficiency}
	\end{subfigure}
	\caption{(a) After initial success, 
		AutoZOOM (here $D=\textnormal{AE}$) can further decrease the distortion by setting $q>1$ in \eqref{eqn_grad_rand_avg} to trade more query counts for smaller distortion in the converged stage, which saturates at $q=4$. (b) Attack dimension reduction is crucial to query-efficient black-box attacks. When compared to black-box attacks on the original dimension, dimension reduction through AutoZOOM-AE reduces roughly 35-40\% query counts on MNIST and CIFAR-10 and at least 95\% on ImageNet.
	}
	\label{Fig_compare}
\end{figure*}

\subsection{Black-box Attacks on ImageNet}
\label{subsec_imagenet}
We selected 50 correctly classified images from the ImageNet test set to perform random targeted attacks and set $\lambda_{\textnormal{ini}}=10$ and the attack dimension reduction ratio to 1.15\%. The results are summarized in Table \ref{table:ImageNet50}. Note that comparing to ZOO, AutoZOOM-AE can significantly reduce the query count required to achieve an initial success by 99.39\% (or 99.35\% to reach the same $L_2$ distortion), which is a remarkable improvement since this means reducing more than \textit{2.2 million} model queries given the fact that the dimension of ImageNet ($\approx$ 270K) is much larger than that of MNIST and CIFAR-10.

\noindent{\textbf{Post-success distortion refinement.}}
As described in Algorithm \ref{algo_autozoom}, adaptive random gradient estimation is integrated in AutoZOOM, offering a quick initial success in attack generation followed by a fine-tuning process to effectively reduce the distortion. This is achieved by adjusting the gradient estimate averaging parameter $q$ in (\ref{eqn_grad_rand_avg}) in the post-success stage. In general, averaging over more random directions (i.e., setting larger $q$) tends to better reduce the variance of gradient estimation error, but at the cost of increased model queries. Figure \ref{Fig_compare} (a) shows the mean distortion against query counts for various choices of $q$ in the post-success stage. The results suggest that setting some small $q$ but $q>1$ can further decrease the distortion at the converged phase when compared with the case of $q=1$. Moreover, the refinement effect on distortion empirically saturates at $q=4$, implying a marginal gain beyond this value. These findings also demonstrate that our proposed AutoZOOM indeed strikes a balance between distortion and query efficiency in black-box attacks.



\subsection{Dimension Reduction and Query Efficiency}
In addition to the motivation from the $O(\sqrt{d/T})$ convergence rate in zeroth-order optimization (Sec. \ref{subsec_AE}), as a sanity check,  we corroborate the benefit of attack dimension reduction to query efficiency in black-box attacks by comparing AutoZOOM (here we use $D=\textnormal{AE}$) with its alternative operated on the original (non-reduced) dimension (i.e., $\delta^\prime=D(\delta^\prime)=\delta$). 
Tested on all three datasets and aforementioned settings, Figure \ref{Fig_compare} (b) shows the corresponding mean query count to initial success and the mean query reduction ratio when $\lambda_{\textnormal{ini}}=10$ in all three datasets.  When compared to the attack results of the original dimension, attack dimension reduction through AutoZOOM reduces roughly 35-40\% query counts on MNIST and CIFAR-10 and at least 95\% on ImageNet. This result highlights the importance of dimension reduction towards query-efficient black-box attacks. For example, without dimension reduction, the attack on the original ImageNet dimension cannot even be successful within the query budge ($Q=200K$ queries).

\subsection{Additional Remarks and Discussion}
$\bullet$ In addition to benchmarking on initial attack success, the query reduction ratio when reaching the same $L_2$ distortion can be directly computed from the last column in each table. \\
$\bullet$ The attack gain in AutoZOOM-AE versus AutoZOOM-BiLIN could sometimes be marginal, while we also note that there is room for improving AutoZOOM-AE by exploring different AE models. However, we advocate AutoZOOM-BiLIN as a practically ideal candidate for query-efficient black-box attacks when testing model robustness, due to its easy-to-mount nature and it has no additional training cost. \\
$\bullet$ While learning effective low-dimensional representations of legitimate images is still a challenging task, black-box attacks using significantly less degree of freedoms (i.e., reduced dimensions), as demonstrated in this paper, are certainly plausible, leading to new implications on model robustness.




\section{Conclusion}

AutoZOOM is a generic attack acceleration framework that is compatible with any gradient-estimation based black-box attack having the general formulation in \eqref{eqs_attack_optimization_problem}. It adopts a new and adaptive random full gradient estimation strategy to strike a balance between query counts and estimation errors, and features
a decoder (AE or BiLIN) for attack dimension reduction and algorithmic convergence acceleration.  Compared to a state-of-the-art attack (ZOO), AutoZOOM consistently reduces the mean query counts when attacking black-box DNN image classifiers for MNIST, CIFAT-10 and ImageNet, attaining at least $93\%$ query reduction in finding initial successful adversarial examples (or reaching the same distortion) while maintaining a similar attack success rate. It can also efficiently fine-tune the image distortion to maintain high visual similarity to the original image.
Consequently, AutoZOOM provides novel and efficient means for assessing the robustness of deployed machine learning models.

%
%
%
%

\section*{Acknowledgements}
Shin-Ming Cheng was supported in part by the Ministry of Science and Technology,
Taiwan, under Grants MOST 107-2218-E-001-005 and MOST 107-2218-E-011-012. Cho-Jui Hsieh and Huan Zhang acknowledge the support by NSF IIS-1719097, Intel faculty award, Google Cloud  and  NVIDIA.

{\small
\bibliographystyle{aaai}
\bibliography{adversarial_learning}
}

\clearpage
\setcounter{equation}{0}
\setcounter{figure}{0}
\setcounter{table}{0}
\setcounter{page}{1}
\makeatletter
\renewcommand{\theequation}{S\arabic{equation}}
\renewcommand{\thefigure}{S\arabic{figure}}
\renewcommand{\thetable}{S\arabic{table}}
\section*{{ \LARGE Supplementary Material}}
\appendix

\section{More Background on Adversarial Attacks and Defenses}
The research in generating adversarial examples to deceive machine-learning models, known as adversarial attacks, tends to evolve with the advance of machine-learning techniques and new publicly available datasets. In \cite{lowd2005adversarial}, the authors studied adversarial attacks to linear classifiers with continuous or Boolean features. In \cite{biggio2013evasion}, the authors proposed a gradient-based adversarial attack on kernel support vector machines (SVMs). More recently, gradient-based approaches are also used in adversarial attacks on image classifiers trained by DNNs  \cite{szegedy2013intriguing,goodfellow2014explaining}. Due to space limitation, we focus on related work in adversarial attacks on DNNs. 
Interested readers may refer to the survey paper \cite{biggio2018wild} for more details.

\begin{table*}[t]
	\centering
	\caption{Comparison of black-box attacks on DNNs}
	\label{table_black_box}
	\begin{adjustbox}{max width=\textwidth}
		\begin{tabular}{|c|c|c|c|c|c|}
			\hline
			Method                         & Approach                                                                       & \begin{tabular}[c]{@{}c@{}}Model\\ ouput\end{tabular} & \begin{tabular}[c]{@{}c@{}}Targeted \\ attack\end{tabular} & \begin{tabular}[c]{@{}c@{}}Large network\\ (ImageNet)\end{tabular} & \begin{tabular}[c]{@{}c@{}}Data-driven \\  acceleration\end{tabular} \\ \hline
			\cite{narodytska2016simple}  & local random search                                                            & score                                                 &                                                            & $\checkmark$                                                       &                                                                               \\ \hline
			\cite{papernot2017practical} & substitute model                                                               & score                                                 & $\checkmark$                                               &                                                                    &                                                                               \\ \hline
			\cite{suya2017query}         & acquisition via posterior                                                      & score                                                 & $\checkmark$                                               &                                                                    &                                                                               \\ \hline
			\cite{brendel2017decision}   & Gaussian perturbation                                                          & decision                                              & $\checkmark$                                               & $\checkmark$                                                       &                                                                               \\ \hline
			\cite{ilyas2018black}        & natural evolution strategy                                                     & score/decision                                                 & $\checkmark$                                               & $\checkmark$                                                       &                                                                               \\ \hline
			\cite{chen2017zoo}           & \begin{tabular}[c]{@{}c@{}}coordinate-wise \\ gradient estimation\end{tabular} & score                                                 & $\checkmark$                                               & $\checkmark$                                                       &                                                                               \\ \hline
			\cite{bhagoji2017exploring}  & \begin{tabular}[c]{@{}c@{}}coordinate-wise \\ gradient estimation\end{tabular} & score                                                 & $\checkmark$                                               & $\checkmark$                                                       &                                                                               \\ \hline
			AutoZOOM (this paper)          & \begin{tabular}[c]{@{}c@{}}Random (full)\\ gradient estimation\end{tabular}    & score                                                 & $\checkmark$                                               & $\checkmark$                                                       & $\checkmark$                                                                  \\ \hline
		\end{tabular}
	\end{adjustbox}
\end{table*}

Gradient-based adversarial attacks on DNNs fall within the white-box setting, since acquiring the gradient with respect to the input requires knowing the weights of the target DNN. In principle, adversarial attacks can be formulated as an optimization problem of minimizing the adversarial perturbation while ensuring attack objectives. In image classification, given a natural image, an \textit{untargeted} attack aims to find a visually similar adversarial image resulting in a different class prediction, while a \textit{targeted} attack aims to find an adversarial image leading to a specific class prediction. The visual similarity between a pair of adversarial and natural images is often measured by the $L_p$ norm of their difference, where $p \geq 1$. Existing powerful white-box adversarial attacks using $L_{\infty}$, $L_2$ or $L_1$ norms include iterative fast gradient sign methods \cite{kurakin2016adversarial_ICLR}, Carlini and Wagner's (C\&W) attack  \cite{carlini2017towards}, elastic-net attacks to DNNs (EAD) \cite{chen2017ead}, etc.  

Black-box adversarial attacks are practical threats to the deployed machine-learning services. Attackers can observe the input-output correspondences of any queried input, but the target model parameters are completely hidden. Therefore, gradient-based adversarial attacks are inapplicable to a black-box setting. As a first attempt, the authors in \cite{papernot2017practical} proposed to train a substitute model using iterative model queries, perform white-box attacks on the substitute model, and leverage the transferability of adversarial examples \cite{papernot2016transferability,liu2016delving} to attack the target model. However, training a representative surrogate for a DNN is challenging due to the complicated and nonlinear classification rules of DNNs and high dimensionality of the underlying dataset. The performance of black-box attacks can be severely degraded if the adversarial examples for the substitute model transfer poorly to the target model.
To bridge this gap, the authors in \cite{chen2017zoo} proposed a black-box attack called ZOO that directly estimates the gradient of the attack objective by iteratively querying the target model. Although ZOO achieves a similar attack success rate and comparable visual quality as many white-box attack methods, it exploits the symmetric difference quotient method \cite{lax2014calculus} for coordinate-wise gradient estimation and value update, which requires excessive target model evaluations and is hence not query-efficient. The same gradient estimation technique is also used in the later work in \cite{bhagoji2017exploring}. Although acceleration techniques such as importance sampling, bilinear scaling and random feature grouping have been used in \cite{chen2017zoo,bhagoji2017exploring}, the coordinate-wise gradient estimation approach still forms a bottleneck for query efficiency.

Beyond optimization-based approaches, the authors in \cite{ilyas2018black} proposed to use a natural evolution strategy (NES) to enhance  query efficiency. Although there is also a vector-wise gradient estimation step in the NES attack, we treat it as an independent and parallel work since its natural evolutionary step is out of the scope of black-box attacks using zeroth-order gradient descent.  We also note that different from NES, our AutoZOOM framework uses a query-efficient random gradient estimation strategy. In addition, AutoZOOM
could be applied to further improve the query efficiency of NES, since NES does not take into account the factor of attack dimension reduction, which is the main focus of this paper.
Under a more restricted setting, where only the decision (top-1 prediction class) is known to an attacker,
the authors in \cite{brendel2017decision} proposed a random-walk based attack around the decision boundary.
Such a black-box attack dispenses class prediction scores and hence requires additional model queries.

In this paper, we focus on improving the query efficiency of gradient-estimation and gradient-descent based black-box attacks and consider the threat model when the class prediction scores are known to an attacker. For reader's reference, we compare existing black-box attacks on DNNs with AutoZOOM in Table \ref{table_black_box}. One unique feature of AutoZOOM 
is the use of reduced attack dimension when mounting black-box attacks, which is an unlabeled data-driven technique (autoencoder) for attack acceleration, and has not been studied thoroughly in existing attacks. While white-box attacks such as \cite{baluja2017adversarial} have utilized autoencoders trained on the training data and the transparent logit representations of DNNs, we propose in this work to use autoencoders trained on unlabeled natural data to improve query efficiency for black-box attacks.   

There has been many methods proposed for defending adversarial attacks to DNNs. However, new defenses are continuously weakened by follow-up attacks \cite{carlini2017adversarial,athalye2018obfuscated}. For instance, model ensembles \cite{tramer2017ensemble} were shown to be effective against some black-box attacks, while they are recently circumvented by advanced attack techniques \cite{ilyas2018ensattack}. In this paper, we focus on improving query efficiency in attacking black-box undefended DNNs.

\section{Proof of Theorem \ref{thm_grad}}
Recall that the data dimension is $d$ and we assume $f$ to be differentiable and its gradient $\nabla f$ to be $L$-Lipschitz.
Fixing $\beta$ and consider  a smoothed version of $f$:
\begin{align}
{f}_\beta(\mathbf x) = & \mathbb E_{\mathbf u}[f(\mathbf x + \beta \mathbf u )].
\end{align}
Based on \cite[Lemma 4.1-a]{gao2014information}, we have the relation
\begin{align}
\nabla f_\beta (\mathbf x) &   = \mathbb E_{\mathbf u }
\left [
\frac{d}{\beta} f(\mathbf x + \beta \mathbf u) \mathbf u
\right ]  =    \frac{d}{b} \mathbb E_{\mathbf u }
\left [
{\bg}
\right ],
\end{align}
which then yields
\begin{align}\label{eq: mean_est}
\mathbb E_{\mathbf u }
\left [
\mathbf g
\right ] = \frac{b}{d} \nabla f_\beta (\mathbf x),
\end{align}
where we recall that $\mathbf g$ has been defined in \eqref{eqn_grad_rand}.
Moreover, based on  \cite[Lemma 4.1-b]{gao2014information}, we have
\begin{align}
\label{eqn_dummy_1}
\|   \nabla f_\beta (\mathbf x) - \nabla f(\mathbf x) \|_2 \leq \frac{\beta d L}{2}.
\end{align}
Substituting \eqref{eq: mean_est} into \eqref{eqn_dummy_1}, we obtain
\[
\|  \mathbb E[ \bg ]- \frac{b}{d}\nabla f(\mathbf x) \|_2 \leq \frac{\beta b L}{2}.
\]
This then implies that
\begin{align}\label{eq: mean_batch}
\mathbb E [ \bg ] = \frac{b}{d} \nabla f(\mathbf x) + \boldsymbol{\epsilon},
\end{align}
where 
$
\| \boldsymbol \epsilon \|_2 \leq \frac{b \beta L}{2}.
$

Once again, by applying \cite[Lemma 4.1-b]{gao2014information}, we can easily obtain that
\begin{align}\label{eq: var_est}
\mathbb E_{\mathbf u} [ \| \bg \|_2^2] & 
\leq
\frac{b^2 L^2 \beta^2}{2} + \frac{2 b^2}{d} \| \nabla  f(\mathbf x) \|_2^2.
\end{align}

Now, let us consider the averaged random gradient estimator in \eqref{eqn_grad_rand_avg},
\[
\overline{\mathbf g} = \frac{1}{q}\sum_{i=1}^q {\mathbf g}_i=\frac{b}{q}\sum_{i=1}^q
\frac{f(\mathbf x + \beta \mathbf u_{i}) - f(\mathbf x) }{\beta} \mathbf u_{i}.
\]
Due to the properties of i.i.d. samples   $\{ \mathbf u_{i} \}$ and \eqref{eq: mean_batch}, we define
\begin{align} \label{eq: mean_g_minibatch}
\bv =: \mathbb E [ {\mathbf g}_{i}] =   \frac{b}{d} \nabla f(\mathbf x) + \boldsymbol{\epsilon}.
\end{align}
Moreover, we have
\begin{align}
\mathbb E[ \| \overline {\mathbf g} \|_2^2 ]  =& \mathbb E \left [ \left  \| \frac{1}{q}\sum_{i=1}^{q}  ( {\mathbf g}_{i}  - \bv) + \bv \right \|_2^2 \right ] \\
=& \| \bv \|_2^2 + \mathbb E \left [ \left  \| \frac{1}{q }\sum_{i=1}^{q}   ( {\mathbf g}_{i}  - \bv) \right \|_2^2 \right ] \nonumber \\
= &  \| \bv \|_2^2 + \frac{1}{q} 
\mathbb E [ \left  \|  {\mathbf g}_{1}  - \bv \right \|_2^2  ] \\
= & \| \bv \|_2^2 + \frac{1}{q} \mathbb E[ \| {\mathbf g}_{1} \|_2^2 ]  - \frac{1}{q}  \| \bv \|_2^2,
\label{eq: appF_2_new}
\end{align}
where we have used the fact that $ \mathbb E [ {\mathbf g}_{i}] =  \mathbb E [ {\mathbf g}_{1}] = \bv~\forall~i$.
The definition of $\bv$ in \eqref{eq: mean_g_minibatch} yields
\begin{align}
\| \bv \|_2^2  \leq & 2 \frac{b^2}{d^2} \| \nabla f(\mathbf x) \|_2^2 + 2 \| \boldsymbol \epsilon \|_2^2 \nonumber \\
\leq &  2 \frac{b^2}{d^2} \| \nabla f(\mathbf x) \|_2^2  + \frac{1}{2} b^2 \beta^2 L^2. \label{eq: appF_3_new}
\end{align}
From \eqref{eq: var_est}, we also obtain that for any $i$,
\begin{align}
\mathbb E[ \| {\mathbf g}_{i} \|_2^2 ] \leq \frac{b^2 L^2 \beta^2}{2} + \frac{2 b^2}{d} \| \nabla  f(\mathbf x) \|_2^2. \label{eq: appF_4_new}
\end{align}
Substituting \eqref{eq: appF_3_new} and \eqref{eq: appF_4_new} into \eqref{eq: appF_2_new}, we obtain
\begin{align}
\mathbb E[ \| \overline {\mathbf g} \|_2^2 ] \leq &    \| \bv \|_2^2    +  \frac{1}{q}  
\mathbb E [ \left  \| {\mathbf g}_{1}   \right \|_2^2  ]  \\
\leq &  2 ( \frac{b^2}{d^2}+\frac{ b^2 }{d q} ) \| \nabla f(\mathbf x) \|_2^2 + \frac{q+1}{2q} b^2 L^2 \beta^2. \label{eq: appF_5_new}
\end{align}
Finally, we bound the mean squared estimation error as
\begin{align}\label{eq: mse_2_new}
\mathbb E[ \| \overline {\mathbf g} - \nabla f(\mathbf x)\|_2^2 ] &  \leq  2 \mathbb E [\| \overline {\mathbf g} - \bv \|_2^2  ] + 2 \|   \bv - \nabla f(\mathbf x)\|_2^2 \nonumber \\
& \leq 2 \mathbb E[ \|  \overline {\mathbf g}\|_2^2] + 2 \|   \frac{b}{d} \nabla f(\mathbf x) + \boldsymbol{\epsilon} - \nabla f (\mathbf x)\|_2^2 \nonumber \\
& \leq    4 ( \frac{b^2}{d^2}+\frac{ b^2 }{d q} + \frac{(b-d)^2}{d^2}) \| \nabla f(\mathbf x) \|_2^2 \nonumber\\
&~~+ \frac{2q+1}{q} b^2 L^2 \beta^2,
\end{align}
which completes the proof.





\begin{table*}[t]
	\centering
	\caption{Architectures of Autoencoders in AutoZOOM}
	\label{table:ae_architecture}
	\begin{adjustbox}{max width=\textwidth}
		\begin{tabular}{@{}llllll@{}}
			\toprule
			Dataset: & \multicolumn{4}{l}{MNIST \hspace{3cm}     Training MSE: 2.00$\times10^{-3}$}\\
			&Reduction ratio / image size / feature map size:&25\% / 28$\times$28$\times$1 / 14$\times$14$\times$1\\
			Encoder: & \multicolumn{5}{l}{ConvReLU-16 $\rightarrow$  MaxPool $\rightarrow$ Conv-1} \\
			Decoder: & \multicolumn{5}{l}{ConvReLU-16 $\rightarrow$ Reshape-Re-U $\rightarrow$ Conv-1} \\ \cmidrule(l){2-6} 
			\multicolumn{6}{l}{} \\ \cmidrule(l){2-6}
			
			Dataset:& \multicolumn{4}{l}{CIFAR-10 \hspace{3cm}     Training MSE: 5.00$\times10^{-3}$}\\
			&Reduction ratio / image size / feature map size:&6.25\% / 32$\times$32$\times$3 / 8$\times$8$\times$3\\
			Encoder: & \multicolumn{5}{l}{ConvReLU-16 $\rightarrow$ MaxPool $\rightarrow$ ConvReLU-3 $\rightarrow$ MaxPool $\rightarrow$ Conv-3} \\
			Decoder: & \multicolumn{5}{l}{ConvReLU-16 $\rightarrow$ Reshape-Re-U $\rightarrow$ ConvReLU-16 $\rightarrow$ Reshape-Re-U  $\rightarrow$ Conv-3} \\ \cmidrule(l){2-6} 
			\multicolumn{6}{l}{} \\ \cmidrule(l){2-6}
			
			Dataset:& \multicolumn{4}{l}{ImageNet \hspace{3cm}     Training MSE: 1.02$\times10^{-2}$}\\
			&Reduction ratio / image size / feature map size:&1.15\% / 299$\times$299$\times$3 / 32$\times$32$\times$3\\
			Encoder: & \multicolumn{5}{l}{\begin{tabular}[c]{@{}l@{}}Reshape-Bi-D $\rightarrow$  ConvReLU-16$\rightarrow$ MaxPool $\rightarrow$ ConvReLU-16$\rightarrow$ MaxPool $\rightarrow$ Conv-3\end{tabular}} \\
			Decoder: & \multicolumn{5}{l}{\begin{tabular}[c]{@{}l@{}}ConvReLU-16 $\rightarrow$ Reshape-Re-U $\rightarrow$ ConvReLU-16 $\rightarrow$ Reshape-Bi-U  $\rightarrow$ Conv-3\end{tabular}} \\ \cmidrule(l){2-6}
			
			\multicolumn{6}{l}{} \\ \midrule
			\multicolumn{6}{l}{ConvReLU-16:        Convolution (16 filters, kernel size: 3$\times$3$\times$$Dep$) + ReLU activation} \\
			\multicolumn{6}{l}{ConvReLU-3: Convolution (3 filters, kernel size: 3$\times$3$\times$$Dep$) + ReLU activation} \\
			\multicolumn{6}{l}{Conv-3:        Convolution (3 filters, kernel size: 3$\times$3$\times$$Dep$) \ \ \ \ \ \ \  Conv-1:        Convolution (1 filter, kernel size: 3$\times$3$\times$$Dep$)} \\
			\multicolumn{6}{l}{Reshape-Bi-D: Bilinear reshaping from 299$\times$299$\times$3 to 128$\times$128$\times$3} \\
			\multicolumn{6}{l}{Reshape-Bi-U:        Bilinear reshaping from 128$\times$128$\times$16 to 299$\times$299$\times$3} \\
			\multicolumn{6}{l}{Reshape-Re-U:        Reshaping by replicating pixels from $U\times V\times$$Dep$ to $2U\times2V\times$$Dep$} \\
			\multicolumn{6}{l}{$Dep$: a proper depth} \\ \bottomrule
		\end{tabular}
	\end{adjustbox}
\end{table*}

\section{Architectures of Convolutional Autoencoders in AutoZOOM}

On MNIST, the convolutional autoencoder (CAE) is trained on 50,000 randomly selected hand-written digits from the MNIST8M dataset\footnote{\url{http://leon.bottou.org/projects/infimnist}}. On CIFAR-10, the CAE is trained on 9,900  images selected from its test dataset. The remaining images are used in black-box attacks. On ImageNet, all the attacked natural images are from 10  randomly selected image labels, and these labels are also used as the candidate attack targets. The CAE is trained on about 9000 images from these classes. 

Table \ref{table:ae_architecture} shows the architectures for all the autoencoders used in this work. Note that the autoencoders designed for ImageNet uses bilinear scaling to transform data size from $299\times299\times Dep$ to $128\times128\times Dep$, and also back from $128\times128\times Dep$ to $299\times299\times Dep$. This is to allow easy processing and handling for the autoencoder's internal convolutional layers.

The normalized mean squared error of our autoencoder trained on MNIST, CIFAR-10 and
25 Imagenet is 0.0027, 0.0049 and 0.0151, respectively, which lies within a reasonable range of compression loss.


\section{More Adversarial Examples of Attacking Inception-v3 in the Black-box Setting}
Figure \ref{Fig_more} shows other adversarial examples of the Inception-v3 model in the black-box targeted attack setting.

\begin{figure*}[t]
	\centering   
	\begin{subfigure}[b]{0.48\linewidth}
		\includegraphics[width=\textwidth]{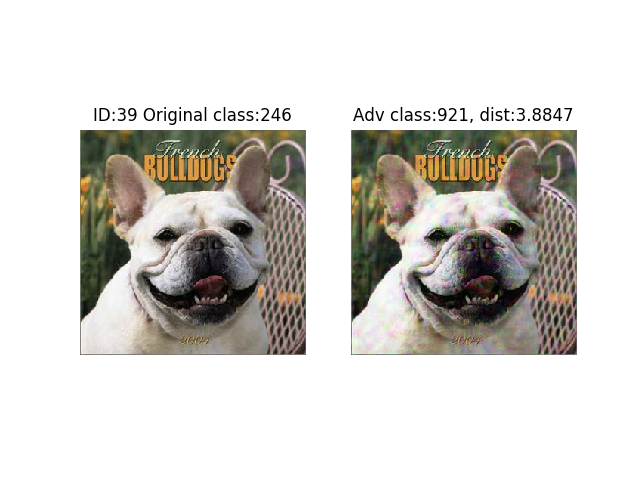}
		\vspace{-15mm}
		\caption{``French bulldog'' to ``traffic light''}
	\end{subfigure}%
	\centering
	\begin{subfigure}[b]{0.48\linewidth}
		\includegraphics[width=\textwidth]{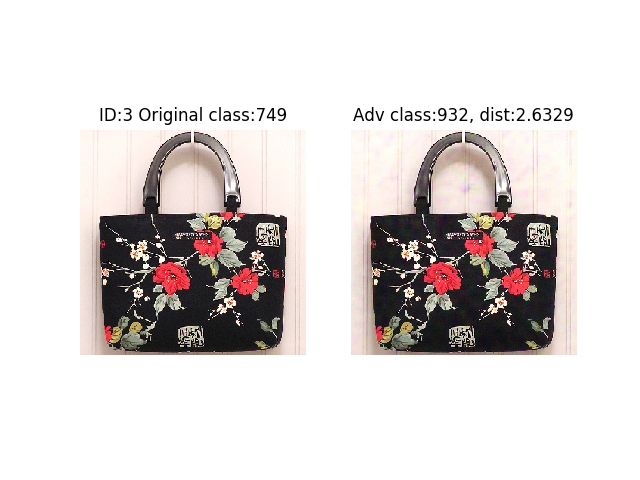}
		\vspace{-15mm}
		\caption{``purse'' to ``bagel''}
	\end{subfigure}
	\\
	\centering
	\begin{subfigure}[b]{0.48\linewidth}
		\includegraphics[width=\textwidth]{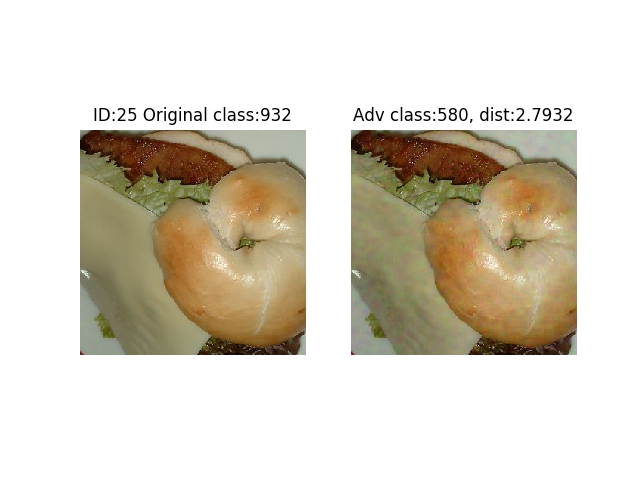}
		\vspace{-15mm}
		\caption{``bagel'' to `` grand piano''}
	\end{subfigure}    
	\centering
	\begin{subfigure}[b]{0.48\linewidth}
		\includegraphics[width=\textwidth]{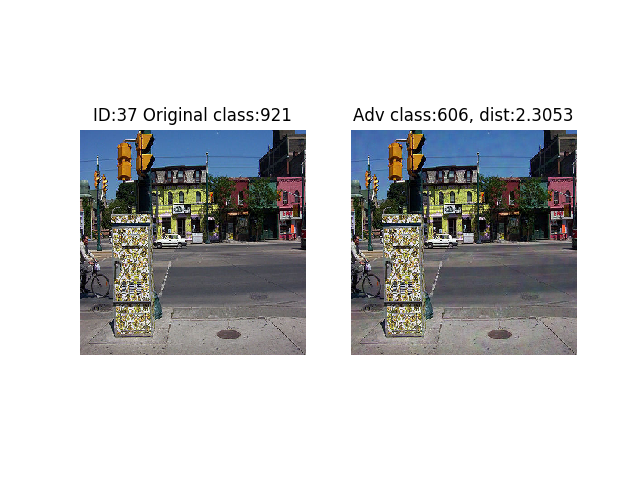}
		\vspace{-15mm}
		\caption{``traffic light'' to `` iPod''}
	\end{subfigure}       
	\caption{Adversarial examples on ImageNet crafted by AutoZOOM when attacking on the Inception-v3 model in the black-box setting with a target class selected at random. 
		Left: original natural images. Right: adversarial examples. }
	\label{Fig_more}
\end{figure*}

\section{Performance Evaluation of Black-box Untargeted Attacks}
Table \ref{table:untargeted} shows the attacking performance of black-box untargeted attacks on MNIST, CIFAR-10 and ImageNet using ZOO and AutoZOOM-BiLIN attacks on the same set of images in Section \ref{subsec_imagenet}. The $\Loss$  function is defined as 
\begin{align}
\Loss=\max \{  \log [F(\bx)]_{t_0} - \max_{j \neq t_0} \log [F(\bx)]_j \},0 \},
\end{align}
where $t_0$ is the top-1 prediction label of a natural image $\bx_0$. We set $\lambda_{\textnormal{ini}} =10$ and use $q=5$ on MNIST and CIFAR-10 and
 $q=4$ on ImageNet
 for distortion fine-tuning in the post-attack phase. Comparing to Table \ref{table:ImageNet50}, the number of model queries can be further reduced since untargeted attacks only require the adversarial images to be classified as any class other than $t_0$ rather than classified as a specific class $t \neq t_0$.

\begin{table*}[t]
	\centering
	\caption{Performance evaluation of black-box untargeted attacks on different datasets. The per-pixel $L_{2}$ distortion thresholds are 0.004, 0.0015 and $5\times10^{-5}$ for MNIST, CIFAR-10 and ImageNet, respectively. }
	\label{table:untargeted}
	\begin{adjustbox}{max width=0.97\textwidth}	
		\begin{tabular}{ccccccccc}
			
			\hline
			Dataset
			& Method
			& \thead{Attack success\\rate (ASR)}
			& \thead{Mean query count\\(initial success)}
			& \thead{Mean query\\count reduction\\ratio (initial success)}
			& \thead{Mean per-pixel\\$L_{2}$ distortion\\(initial success)}
			& \thead{True positive\\rate (TPR)}
			& \thead{Mean query count\\with per-pixel $L_{2}$\\distortion $\leq$ threshold}\\
			\hline
			\multirow{2}{*}{MNIST}     & ZOO 	          & 100.00\%  &   7856.64 &  0.00\% & 3.79$\times10^{-3}$ & 100.00\% &  12392.96 \\
			& AutoZOOM-BiLIN   & 100.00\%  &     98.82 & 98.74\% & 4.21$\times10^{-3}$ & 100.00\% &    692.94 \\
			\hline
			\multirow{2}{*}{CIFAR-10}  & ZOO 	          & 100.00\%  &   3957.76 &  0.00\% & 5.78$\times10^{-4}$ & 100.00\% &   4644.60 \\
			& AutoZOOM-BiLIN   & 100.00\%  &      85.6 & 97.83\% & 6.47$\times10^{-4}$ & 100.00\% &    104.48 \\
			\hline
			\multirow{2}{*}{ImageNet}  & ZOO 	          &  94.00\%  & 271627.91 &  0.00\% & 2.32$\times10^{-5}$ &  91.49\% & 334901.58 \\
			& AutoZOOM-BiLIN   & 100.00\%  &   1695.27 & 99.37\% & 3.02$\times10^{-5}$ &  94.00\% &   4639.11 \\
			\hline
		\end{tabular}
	\end{adjustbox}
	\vspace{-4mm}
\end{table*}

\end{document}